\documentclass[letterpaper,10pt,conference]{ieeeconf}
\IEEEoverridecommandlockouts
\usepackage{graphicx}
\usepackage{amsmath,amssymb,bm}
\usepackage{booktabs,array,tabularx}
\usepackage{algorithm}
\usepackage{algpseudocode}
\newtheorem{lemma}{Lemma}
\newtheorem{remark}{Remark}
\makeatletter
\def\@begintheorem#1#2{\@IEEEtmpitemindent\itemindent\topsep 0pt\rmfamily\trivlist%
    \item[\hskip\labelsep{\bfseries #1\ #2:}]\itemindent\@IEEEtmpitemindent}
\def\@opargbegintheorem#1#2#3{\@IEEEtmpitemindent\itemindent\topsep 0pt\rmfamily\trivlist%
    \item[\hskip\labelsep{\bfseries #1\ #2\ (#3):}]\itemindent\@IEEEtmpitemindent}
\makeatother
\renewenvironment{proof}{\par\noindent\textbf{Proof.}\ }{\hfill$\square$\par}
\usepackage[table]{xcolor}
\definecolor{rewardTask}{HTML}{E8F1FB}
\definecolor{rewardRegularization}{HTML}{EAF5EC}
\definecolor{rewardSafety}{HTML}{FFF0E0}
\usepackage{cite}
\usepackage{url}
\usepackage{flushend}
\usepackage{microtype}
\title{\LARGE\bf Proprioceptive Force Estimation for\\
Quadruped Locomotion and Human-Robot Interaction}

\author{Run Wang, Xu Yang, Alapati Tuerxun, Yilin Mo$^{\ast}$
\thanks{The authors are with the Department of Automation, Tsinghua University, Beijing, China.
        {\tt\small \{wangrun24, yangx21, alpttex25\}@mails.tsinghua.edu.cn, ylmo@tsinghua.edu.cn}}%
\thanks{$^{\ast}$Corresponding author}%
}

\begin{document}
\bstctlcite{icra:reference_style}
\maketitle
\thispagestyle{empty}
\pagestyle{empty}
\suppressfloats[t]

\begin{abstract}
    Payload forces must be accommodated during locomotion, while leash forces can specify desired motion. We investigate whether a shared three-dimensional force estimate in newtons, inferred from proprioceptive history under sustained loading, can support both tasks. An estimator and locomotion policy are jointly trained with supervised force and velocity outputs and learned latent context. The estimated force conditions locomotion and additionally generates planar-velocity and yaw-rate commands for leash guidance through an analytical map. In sustained-force simulation sweeps, temporal means of componentwise force root mean square error range from 1.44 to 2.83\,N. Compared with a domain-randomized baseline, the framework reduces velocity-tracking and base-orientation error scores by 21.6\% and 46.5\%, respectively, and increases mean survival from 68.29\% to 94.60\% in separate sustained-force tests. Unitree Go1 experiments demonstrate stationary vertical and horizontal force estimation, locomotion with an 8.5\,kg payload whose weight exceeds the 70\,N training force limit, and leash guidance using the same force-estimation interface.
\end{abstract}

\begin{figure}[t]
    \centering
    \includegraphics[width=\columnwidth]{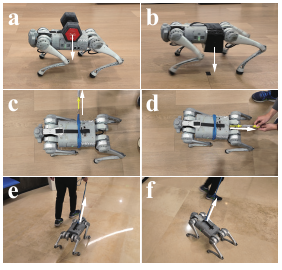}
    \vspace{-25pt}
    \caption{Go1 demonstrations of the external-force interface: (a) payload-force estimation; (b) payload locomotion using online force estimates; (c,d) horizontal-force estimation; and (e,f) leash-guided locomotion. Arrows indicate applied forces.}
    \label{fig:snapshots}
\end{figure}

\begin{figure*}[t]
    \centering
    \includegraphics[width=\textwidth]{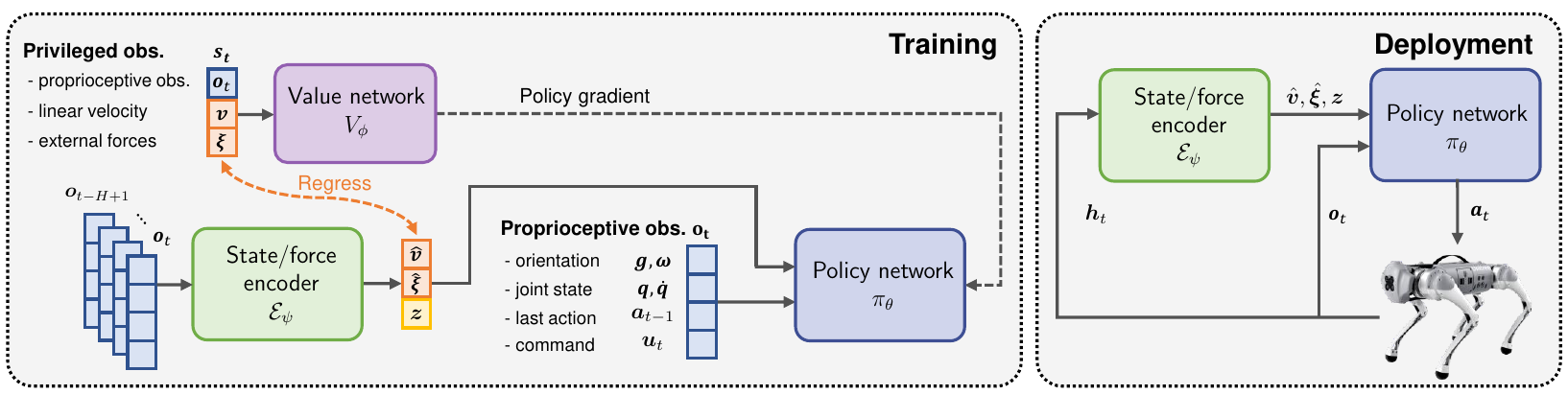}
    \caption{Shared force-estimation interface. Encoder $\mathcal{E}_{\psi}$ and policy $\pi_{\theta}$ are jointly trained with privileged critic $V_{\phi}$; normalization is omitted here. Force estimates condition locomotion in both tasks and additionally generate leash commands through $\mathcal{M}$. Payload commands are supplied externally. Deployed weights remain fixed.}
    \label{fig:overview}
\end{figure*}

\section{Introduction}

External forces play different roles in quadruped locomotion and physical interaction. A payload changes the loading that the robot must accommodate, while a leash pull can indicate a desired direction of travel. Learning-based controllers have enabled agile locomotion through simulation training and hardware deployment~\cite{hwangbo2019learning,miki2022learning,rudin2022learning}. Load carrying and physical guidance can further benefit from a force estimate usable by both locomotion and interaction controllers.

Domain randomization (DR) exposes a policy to varied dynamics to improve deployment robustness~\cite{peng2018sim,aljalbout2026reality}, and simulation parameter distributions can be refined using real-world rollouts~\cite{chebotar2019closing}. History-based adaptation infers context from recent observations and actions~\cite{kumar2021rma,nahrendra2023dreamwaq,long2023hybrid}. These representations support locomotion without assigning physical meaning to every coordinate, but do not necessarily provide a force readout for downstream control.

Prior work has jointly trained force estimation and locomotion for tug-guided navigation~\cite{defazio2023seeing}. We ask whether a force estimate recovered from proprioception under sustained loading can serve as a shared physical interface for payload locomotion and guidance. We supervise three-axis base-frame force with simulator-applied loads in newtons. For payload locomotion, the estimate conditions the policy under supplied velocity commands; for leash guidance, it additionally generates planar velocity and yaw-rate commands through an analytical mapping. Table~\ref{tab:qualitative_comparison} positions these two uses relative to existing deployment representations.

Force inference must distinguish persistent loading from contact and commanded motion after the initial response subsides. Our estimator predicts force, velocity, and latent context from a short observation history. Supervised regression and reinforcement learning jointly train the estimator and locomotion policy; their outputs update online with fixed network weights.

We evaluate this shared interface through force-reference measurements, force-conditioned locomotion, and force-driven guidance on a Unitree Go1 (Fig.~\ref{fig:snapshots}). Our contributions are:
\begin{itemize}
    \item A proprioceptive estimation and locomotion framework trained under sustained simulator-applied forces, with explicit supervision of three-axis force outputs in newtons.
    \item A shared force interface that conditions payload locomotion and analytically generates continuous-valued planar-velocity and yaw-rate commands for leash guidance.
    \item Evidence of sustained-force estimation in simulation and on a stationary Go1, with online estimates supporting load carrying beyond the training force magnitude and leash guidance.
\end{itemize}

\section{Related Work}
\label{sec:related}

\subsection{Proprioceptive Adaptation for Locomotion}
\label{sec:related_adaptation}

Privileged-policy distillation~\cite{lee2020learning} and Rapid Motor Adaptation (RMA)~\cite{kumar2021rma} infer latent context from proprioceptive history. DreamWaQ combines velocity estimation with implicit terrain context~\cite{nahrendra2023dreamwaq}, while the Hybrid Internal Model learns response embeddings through contrastive learning~\cite{long2023hybrid}. Ji et al. jointly learn a state estimator and locomotion policy~\cite{ji2022concurrent}.

The top group of Table~\ref{tab:qualitative_comparison} contrasts latent context with explicit physical outputs. Bridging Adaptivity and Safety (BAS) estimates payload mass, center-of-mass (CoM) shift, and friction for locomotion and safety assessment~\cite{zhong2025bas}. Beyond Robustness estimates payload mass, friction, position, and velocity to stabilize moving loads~\cite{chang2025beyond}; Rapid Embodiment Adaptation identifies joint limits and mass properties for cross-embodiment locomotion~\cite{li2026rapid}. Our framework retains history-based conditioning but supervises the resultant three-axis force without reconstructing the full payload state.

\subsection{Physical Identification and Adaptive Control}
\label{sec:related_identification}

Classical adaptive control updates robot and payload parameters online~\cite{slotine1987adaptive}. Quadruped methods recover trunk mass and CoM recursively~\cite{tournois2017online}, estimate payload disturbances for inverse-dynamics control~\cite{jin2022unknown}, or update model-predictive control using Kalman-filter mass and CoM estimates~\cite{haack2025adaptive}. Gu et al. combine static-payload CoM identification with an adaptive human-arm model~\cite{gu2025adaptive}. Other methods identify terrain friction and stiffness~\cite{Chen24physical} or calibrate simulator parameters~\cite{sobanbabu2025sampling}. Momentum observers estimate external disturbances for whole-body compensation~\cite{morlando2021wholebody}. Our force output instead serves as a time-varying input to both locomotion and guidance.

\subsection{Force Estimation for Physical Interaction}
\label{sec:related_force}

Model-based external-wrench observers combine dynamics, inertial measurement units (IMUs), and contact-force measurements~\cite{benallegue2018model}. DeFazio et al. jointly train force estimation and locomotion using imposed base-velocity changes as tug proxies; estimated peaks select discrete left/right navigation choices~\cite{defazio2023seeing}. HAC-LOCO supervises velocity and force heads from proprioceptive history to condition locomotion, and learns residual velocity commands for compliant response~\cite{zhou2025hac}. FACET distills force-responsive impedance tracking without exposing a calibrated force output~\cite{xu2025facet}. Model-based human--quadruped cooperation uses measured interaction forces to adjust footholds for guided transport~\cite{gu2025hierarchical}.

Table~\ref{tab:qualitative_comparison} separates the deployed representation from its locomotion and guidance roles. Its lower group shows that an explicit force estimate alone does not distinguish our approach: HAC-LOCO also provides one. Our distinction is the shared use of force components supervised in newtons under sustained loading: they condition locomotion under supplied commands and analytically generate continuous planar-velocity and yaw-rate commands for guidance. Section~\ref{sec:method} defines this interface, and Section~\ref{sec:experiments} evaluates its readout and both control uses.

\begin{table*}[t]
    \centering
    \caption{Main deployment representations and their uses in locomotion and force interaction. Auxiliary velocity and latent outputs are omitted for force estimators.}
    \label{tab:qualitative_comparison}
    \footnotesize
    \setlength{\tabcolsep}{5pt}
    \renewcommand{\arraystretch}{1.0}
    \begin{tabularx}{\textwidth}{>{\raggedright\arraybackslash}p{0.17\textwidth}>{\raggedright\arraybackslash}p{0.27\textwidth}>{\raggedright\arraybackslash}p{0.20\textwidth}>{\raggedright\arraybackslash}X}
        \toprule
        \textbf{Method}                          & \textbf{Online representation}  & \textbf{Locomotion role} & \textbf{Guidance / compliance}          \\
        \midrule
        \rowcolor{rewardTask}
        \multicolumn{4}{l}{\textit{Locomotion adaptation}}                                                                                              \\
        RMA~\cite{kumar2021rma}                  & Latent context                  & Policy conditioning      & ---                                     \\
        BAS~\cite{zhong2025bas}                  & Mass, CoM shift, friction       & Policy + safety          & ---                                     \\
        Beyond Robustness~\cite{chang2025beyond} & Load state and properties       & Load stabilization       & ---                                     \\
        \midrule
        \rowcolor{rewardTask}
        \multicolumn{4}{l}{\textit{Force-based interaction}}                                                                                            \\
        DeFazio et al.~\cite{defazio2023seeing}  & Tug estimate ($\Delta v$ proxy) & Policy conditioning      & Discrete turn selection                 \\
        FACET~\cite{xu2025facet}                 & Latent context                  & Impedance tracking       & Tunable compliance                      \\
        Gu et al.~\cite{gu2025hierarchical}      & Measured force                  & Cooperative transport    & Force-guided footholds                  \\
        HAC-LOCO~\cite{zhou2025hac}              & 3-D force estimate              & Force conditioning       & \textbf{Learned} velocity residuals     \\
        \textbf{Ours}                            & 3-D force estimate (N)          & Force conditioning       & \textbf{Analytic} velocity/yaw-rate map \\
        \bottomrule
    \end{tabularx}
    \par\smallskip
    \parbox{\textwidth}{\scriptsize $\Delta v$: imposed velocity change used for training; ---: no dedicated force-interaction interface.}
\end{table*}

\section{Method}
\label{sec:method}

\subsection{Force Representation and Modeling Assumptions}
\label{sec:method_scope}

Let $\bm{\xi}=[f_x,f_y,f_z]^\top$ denote the resultant external force applied to the base, expressed in newtons along its forward, left, and upward axes. For a payload of mass $m_p$ supported only by the robot, let $\bm{c}_p^W$ denote its center-of-mass position in the world frame, $\bm{g}^W$ the gravitational acceleration, and $R_{WB}$ the rotation from base to world coordinates. The force exerted by the payload on the robot is
\begin{equation}
    \bm{\xi}_p=m_pR_{WB}^{\top}
    \left(\bm{g}^{W}-\ddot{\bm{c}}_p^{W}\right),
    \label{eq:payload_force}
\end{equation}
where $\ddot{\bm{c}}_p^W$ is the payload acceleration. This representation omits payload-induced moments and changes in inertia.

For stationary loads, $\bm{\xi}_p\approx m_pR_{WB}^{\top}\bm{g}^{W}$; base pitch and roll give gravity horizontal components. During motion, payload acceleration also contributes, so mass alone does not specify the instantaneous force.

Proprioceptive force inference can be confounded by contact, terrain, and actuation. We assume forces persist long enough to produce distinguishable histories within the training distribution; unique identification is not guaranteed. Finite history and robot response limit estimation bandwidth.

Control performance depends on both estimation accuracy and the learned policy response, motivating separate evaluations of force error and locomotion.

\subsection{Proprioceptive Force Estimation}

Figure~\ref{fig:overview} shows the architecture. At step $t$, let $\bm{\omega}_t$ be base angular velocity, $\bm{g}_t$ projected unit gravity, and $\bm{q}_t,\dot{\bm{q}}_t$ joint positions and velocities. The observation also includes the previous action $\bm{a}_{t-1}$ and command $\bm{u}_t=[v_x^{\mathrm{cmd}},v_y^{\mathrm{cmd}},\omega_z^{\mathrm{cmd}}]_t^\top$:
\begin{equation}
    \bm{o}_t=[\bm{\omega}_t;\bm{g}_t;\bm{q}_t;\dot{\bm{q}}_t;\bm{a}_{t-1};\bm{u}_t].
\end{equation}
Semicolons denote vector concatenation. Angular velocity and projected gravity are expressed in the base frame.

We stack $H=10$ observations, newest first, into the history $\bm{h}_t=[\bm{o}_t;\ldots;\bm{o}_{t-H+1}]$. Each observation has 45 coordinates, yielding a 450-dimensional history spanning 0.18\,s at 50\,Hz. A multilayer perceptron (MLP) encoder $\mathcal{E}_{\psi}$, parameterized by $\psi$, predicts base linear velocity $\bm{v}_t$, external force $\bm{\xi}_t$, and a ten-dimensional latent representation $\bm{z}_t$:
\begin{equation}
    (\hat{\bar{\bm{v}}}_t,\hat{\bar{\bm{\xi}}}_t,\bm{z}_t)
    =\mathcal{E}_{\psi}(\bar{\bm{h}}_t).
    \label{eq:estimator}
\end{equation}
Hats denote estimates and bars denote empirical normalization (Section~\ref{sec:joint}). Linear velocity is expressed in the base frame; latent coordinates have no assigned physical meaning.

Previous actions and commands help distinguish maneuvers from loading responses, while $\bm{z}_t$ represents context beyond the supervised outputs.

\subsection{Force-Conditioned Locomotion}

The actor $\pi_{\theta}$, parameterized by $\theta$, conditions its action on the instantaneous observation and encoder outputs:
\begin{equation}
    \bm{a}_t\sim\pi_{\theta}(\cdot\mid\bar{\bm{o}}_t,\hat{\bar{\bm{v}}}_t,\hat{\bar{\bm{\xi}}}_t,\bm{z}_t).
\end{equation}
The actor receives 61 inputs and produces a 12-dimensional action $\bm{a}_t$. Desired joint angles are $\bm{q}^{\mathrm{des}}_t=\bm{q}^{\mathrm{nominal}}+0.2\bm{a}_t$, where $\bm{q}^{\mathrm{nominal}}$ contains nominal hip, thigh, and calf angles of $(0,1,-2)$ radians on each leg. Joint proportional--derivative (PD) control tracks these targets with torque saturation at the model's actuator limits.

The training-only asymmetric critic $V_{\phi}$, parameterized by $\phi$, estimates state value~\cite{pinto2017asymmetric}. Its input is the normalized privileged state $\bm{s}_t=[\bm{o}_t;\bm{v}_t;\bm{\xi}_t]$ (51 coordinates), with simulator-provided velocity and force. All three networks have hidden widths $[512,256,128]$, exponential linear unit (ELU) activations, and linear outputs.

\subsection{Joint Optimization}
\label{sec:joint}
We jointly train the estimator and policy following~\cite{ji2022concurrent}. The objective combines a proximal policy optimization (PPO) loss $\mathcal{L}_{\mathrm{PPO}}$~\cite{schulman2017proximal} with a supervised estimation loss $\mathcal{L}_{\mathrm{adapt}}$, weighted by $c$:
\begin{equation}
    \mathcal{L}=\mathcal{L}_{\mathrm{PPO}}+c\mathcal{L}_{\mathrm{adapt}},
    \qquad c=3.
    \label{eq:loss}
\end{equation}
PPO contains policy, value, and entropy terms. Targets $\bm{y}=[\bm{v};\bm{\xi}]$ receive uniform label noise with half-widths 0.1\,m/s and 2\,N, respectively, and are normalized componentwise:
\begin{equation}
    \mathcal{N}_y(\bm{y})=(\bm{y}-\bm{\mu}_y)\oslash(\bm{\sigma}_y+10^{-2}),
\end{equation}
where $\bm{\mu}_y,\bm{\sigma}_y$ are running means and standard deviations, and $\oslash$ is elementwise division. Targets, current observations, histories, and critic inputs use separate statistics, frozen after $2\times10^7$ samples. For a minibatch of $B$ samples indexed by $b$, noisy labels $\bm{y}^{\mathrm{lbl}}$, and $\hat{\bar{\bm{y}}}=[\hat{\bar{\bm{v}}};\hat{\bar{\bm{\xi}}}]$,
\begin{equation}
    \mathcal{L}_{\mathrm{adapt}}=\frac{1}{6B}\sum_{b=1}^{B}
    \left\|\hat{\bar{\bm{y}}}_b-\mathcal{N}_y(\bm{y}^{\mathrm{lbl}}_b)\right\|_2^2.
\end{equation}
Regression supervises the six velocity and force outputs, while the PPO policy loss propagates gradients through all 16 encoder outputs. The critic uses privileged inputs independently of the encoder. The policy receives estimated quantities throughout training, and one adaptive moment estimation (Adam) optimizer~\cite{kingma2014adam} updates all network parameters $\Theta=(\psi,\theta,\phi)$ (Algorithm~\ref{alg:training}).

\begin{algorithm}[tb]
    \caption{Joint Force Estimation and Locomotion Learning}
    \label{alg:training}
    \small
    \begin{algorithmic}[1]
        \Require Randomized simulator; $\mathcal{E}_\psi$, $\pi_\theta$, critic $V_\phi$
        \State Initialize $\Theta=(\psi,\theta,\phi)$ and normalization statistics
        \For{each training iteration}
        \State Collect rollouts using $\mathcal{E}_\psi$ and $\pi_\theta$, updating normalization statistics until frozen
        \State Store normalized inputs, targets, and transitions
        \State Compute returns and advantages using $V_\phi$
        \For{each PPO minibatch}
        \State Recompute encoder, actor, and critic outputs
        \State $\mathcal{L}\gets\mathcal{L}_{\mathrm{PPO}}+c\mathcal{L}_{\mathrm{adapt}}$
        \State $\Theta\gets\operatorname{Adam}(\Theta,\nabla_{\Theta}\mathcal{L})$
        \EndFor
        \EndFor
        \State Save $\mathcal{E}_\psi$, $\pi_\theta$, and normalization statistics
    \end{algorithmic}
\end{algorithm}

\subsection{Deployment and Interaction Interface}

At 50\,Hz, the history is updated and the estimator and actor mean are evaluated with fixed weights and frozen normalizers. The policy receives normalized estimates; $\mathcal{N}_y^{-1}$ recovers physical velocity and force:
\begin{equation}
    \begin{bmatrix}\hat{\bm{v}}_t\\\hat{\bm{\xi}}_t\end{bmatrix}
    =\mathcal{N}_y^{-1}\left(
    \begin{bmatrix}\hat{\bar{\bm{v}}}_t\\\hat{\bar{\bm{\xi}}}_t\end{bmatrix}
    \right).
    \label{eq:physical_readout}
\end{equation}
Payload locomotion uses online force estimates and externally supplied motion commands. For leash guidance, let $\hat{\bm{f}}=[\hat f_x,\hat f_y]^\top$ be the estimated horizontal force. For nonzero $\hat{\bm{f}}$, its base-frame direction is $\alpha(\hat{\bm{f}})=\operatorname{atan2}(\hat f_y,\hat f_x)\in(-\pi,\pi]$. The command map $\mathcal{M}$ uses a radial dead zone with threshold $F_d=10$\,N:
\begin{equation}
    \mathcal{M}(\hat{\bm{f}})=
    \begin{cases}
        \bm{0}, & \|\hat{\bm{f}}\|_2\le F_d, \\
        [K_1\hat f_x;K_2\hat f_y;K_3\alpha(\hat{\bm{f}})],
                & \|\hat{\bm{f}}\|_2>F_d.
    \end{cases}
    \label{eq:leash}
\end{equation}
The output comprises planar velocity $\bm{v}_{xy}^{\mathrm{cmd}}=[v_x^{\mathrm{cmd}},v_y^{\mathrm{cmd}}]^\top$ and yaw rate $\omega_z^{\mathrm{cmd}}$. The gains $K_1,K_2$ have units of $\mathrm{m}\,\mathrm{s}^{-1}\,\mathrm{N}^{-1}$, and $K_3$ has units of $\mathrm{s}^{-1}$.

These command sources implement the two control roles in Table~\ref{tab:qualitative_comparison}. The estimator does not classify loading as a payload or a human; we evaluate these uses in separate hardware trials. The following analysis relates force error to command error where the map is active.

\begin{samepage}
    \begin{lemma}[Local force-to-command sensitivity]
        \label{lem:command_error}
        Let $\bm{f}=[f_x,f_y]^\top$ be the true horizontal force and $\bm{e}=\hat{\bm{f}}-\bm{f}$ its estimation error. For an error budget $\varepsilon_f>0$ and a forward-force lower bound $F_*>F_d+\varepsilon_f$, assume $\|\bm{e}\|_2\le\varepsilon_f$ and $f_x\ge F_*$. The differences $\Delta\bm{v}_{xy}^{\mathrm{cmd}}$ and $\Delta\omega_z^{\mathrm{cmd}}$ between the commands $\mathcal{M}(\hat{\bm{f}})$ and $\mathcal{M}(\bm{f})$ satisfy
        \begin{align}
            \|\Delta\bm{v}_{xy}^{\mathrm{cmd}}\|_2
             & \le \max(|K_1|,|K_2|)\varepsilon_f,\nonumber      \\
            |\Delta\omega_z^{\mathrm{cmd}}|
             & \le \frac{|K_3|\varepsilon_f}{F_*-\varepsilon_f}.
            \label{eq:command_error_bound}
        \end{align}
    \end{lemma}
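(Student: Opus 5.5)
First I will show that both $\bm{f}$ and $\hat{\bm{f}}$ lie in the active branch of $\mathcal{M}$ in \eqref{eq:leash}. For the true force, $\|\bm{f}\|_2\ge|f_x|\ge F_*>F_d$. For the estimate, the first coordinate satisfies $\hat f_x=f_x+e_x\ge F_*-|e_x|\ge F_*-\varepsilon_f>F_d$. Hence $\|\hat{\bm{f}}\|_2>F_d$ as well. So both commands use the linear-plus-angle formula, and the dead-zone discontinuity never enters the difference. I will also note that $\hat f_x>0$ and $f_x>0$. This places both vectors in the open right half-plane, so $\alpha=\operatorname{atan2}(f_y,f_x)=\arctan(f_y/f_x)\in(-\pi/2,\pi/2)$ for both, and no $\pm\pi$ branch cut is crossed.

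For the velocity bound, the active branch gives $\Delta\bm{v}_{xy}^{\mathrm{cmd}}=[K_1e_x;K_2e_y]$. Its squared norm is
\[
K_1^2e_x^2+K_2^2e_y^2\le\max(K_1^2,K_2^2)\|\bm{e}\|_2^2 .
\]
Taking square roots and using $\|\bm{e}\|_2\le\varepsilon_f$ gives the first inequality.

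For the yaw-rate bound, $\Delta\omega_z^{\mathrm{cmd}}=K_3(\alpha(\hat{\bm{f}})-\alpha(\bm{f}))$, so it suffices to show
\[
|\alpha(\hat{\bm{f}})-\alpha(\bm{f})|\le\varepsilon_f/(F_*-\varepsilon_f).
\]
I will apply the mean value theorem to $\alpha$ along the segment $\bm{p}(s)=\bm{f}+s\bm{e}$, $s\in[0,1]$. This segment stays in the half-plane $\{x\ge F_*-\varepsilon_f\}$ because every point has first coordinate $f_x+se_x\ge F_*-\varepsilon_f>0$. Since $\alpha$ is smooth there, with gradient $\nabla\alpha(\bm{p})=[-p_y,p_x]/\|\bm{p}\|_2^2$, we have $\|\nabla\alpha(\bm{p})\|_2=1/\|\bm{p}\|_2\le 1/p_x\le 1/(F_*-\varepsilon_f)$. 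Then
\[
|\alpha(\hat{\bm{f}})-\alpha(\bm{f})|\le\sup_s\|\nabla\alpha(\bm{p}(s))\|_2\,\|\bm{e}\|_2\le \varepsilon_f/(F_*-\varepsilon_f),
\]
and multiplying by $|K_3|$ gives the second inequality.

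The main obstacle is the angle step. The two points to get right are that the segment stays away from the origin and that no branch cut of $\operatorname{atan2}$ is crossed; both follow from the uniform lower bound on the first coordinate along the segment. Bounding the gradient norm by $1/p_x$ rather than $1/\|\bm{p}\|_2$ is slightly loose, but it is exactly what the stated denominator $F_*-\varepsilon_f$ requires.
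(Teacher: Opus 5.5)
Your proposal is correct and follows essentially the same route as the paper. The forward-component lower bound $F_*-\varepsilon_f>F_d$ along the segment keeps $\mathcal{M}$ active and $\alpha$ smooth, the velocity bound comes from the operator norm of $\operatorname{diag}(K_1,K_2)$, and the yaw-rate bound comes from integrating $\|\nabla\alpha\|_2=1/\|\bm{p}\|_2\le 1/(F_*-\varepsilon_f)$ along the segment; your explicit remarks on the $\operatorname{atan2}$ branch cut and on the step $\|\bm{p}\|_2\ge p_x$ only spell out details the paper's proof leaves implicit.
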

    \par
\end{samepage}

\begin{proof}
    Every point on the segment joining $\bm{f}$ and $\hat{\bm{f}}$ has forward component at least $F_*-\varepsilon_f>F_d$. The map is therefore active and the angle continuous along the segment. The planar bound follows from the operator norm of $\operatorname{diag}(K_1,K_2)$. Since $\|\nabla\alpha\|_2=1/\|\bm{f}\|_2$, integrating the gradient along the segment yields the angular bound.
\end{proof}

\begin{remark}[Interaction gain selection]
    \label{rem:sensitivity_design}
    For forward pulls with $f_x\ge22$\,N and an assumed error budget $\varepsilon_f=2$\,N, Lemma~\ref{lem:command_error} gives command-error tolerances of 0.1\,m/s and 0.5\,rad/s when $\max(|K_1|,|K_2|)\le0.05$\,m\,s$^{-1}$\,N$^{-1}$ and $|K_3|\le5$\,s$^{-1}$, matching the deployed gains. This is a local sensitivity bound under the assumed error budget.
\end{remark}

\subsection{Reward and Domain Randomization}
\label{sec:reward_training}

The shared reward is $r_t=\max(0,\Delta t\sum_jw_j r_j)$, where $\Delta t=0.02$\,s is the control interval and $r_j,w_j$ are the components and weights in Table~\ref{tab:reward}. Tracking, posture, effort, and contact terms follow standard locomotion practice~\cite{rudin2022learning,nahrendra2023dreamwaq,long2023hybrid}. A curriculum scales both planar command ranges from 0.5 to 1.0, giving forward limits from $\pm0.75$ to $\pm1.5$\,m/s and lateral limits from $\pm0.5$ to $\pm1.0$\,m/s.

\begin{table}[t]
    \centering
    \small
    \setlength{\tabcolsep}{3pt}
    \renewcommand{\arraystretch}{1.1}
    \caption{Reward Function Components.}
    \label{tab:reward}
    \begin{tabular}{l|l|l}
        \hline
        \rowcolor{rewardTask}
        \textbf{Task rewards}     & \textbf{Expression ($r_j$)}                                                              & \textbf{Weight ($w_j$)} \\
        \hline
        Tracking lin. vel.        & $\exp\left\{-4 \left\| \bm{v}_{xy}^{\text{cmd}} - \bm{v}_{xy} \right\|_2^2\right\}$      & $1.0$                   \\
        Tracking ang. vel.        & $\exp\left\{-4 \left( \omega_{z}^{\text{cmd}} - \omega_{z} \right)^2\right\}$            & $0.5$                   \\
        \hline
        \rowcolor{rewardRegularization}
        \textbf{Regularization}   & \textbf{Expression ($r_j$)}                                                              & \textbf{Weight ($w_j$)} \\
        \hline
        Alive                     & $1.0$                                                                                    & $0.5$                   \\
        Base height               & $\left( h_b^{\text{nominal}} - h_b \right)^2$                                            & $-50.0$                 \\
        Orientation               & $\left\| \bm{g}_{xy} \right\|_2^2$                                                       & $-5.0$                  \\
        Foot position             & $\sum_{i} \left\| \bm{p}_{xy,i} - \bm{p}_{xy,i}^{\text{nominal}} \right\|_2^2$           & $-10.0$                 \\
        Lin. vel. (z)             & $v_{z}^2$                                                                                & $-2.0$                  \\
        Ang. vel. (xy)            & $\left\| \bm{\omega}_{xy} \right\|_2^2$                                                  & $-0.05$                 \\
        Action rate               & $\left\| \bm{a}_t - \bm{a}_{t-1} \right\|_2^2$                                           & $-0.02$                 \\
        Torques                   & $\left\| \bm{\tau} \right\|_2^2$                                                         & $-2.5 \times 10^{-4}$   \\
        Joint acceleration        & $\left\| \ddot{\bm{q}} \right\|_2^2$                                                     & $-2.5 \times 10^{-7}$   \\
        Stand still               & $\left\| \bm{q}^{\text{nominal}} - \bm{q} \right\|_1 \cdot \mathbb{I}_{\text{zero cmd}}$ & $-1.0$                  \\
        Feet air time             & $\mathbb{I}_{\text{move cmd}} \sum_i (T_i-0.5)d_i$                                       & $0.5$                   \\
        \hline
        \rowcolor{rewardSafety}
        \textbf{Safety penalties} & \textbf{Expression ($r_j$)}                                                              & \textbf{Weight ($w_j$)} \\
        \hline
        Foot slip                 & $\sum_i \left\|c_i \cdot \dot{\bm{p}}_i\right\|_2$                                       & $-0.2$                  \\
        Joint position limit      & $\left\| \bm{q}_{\text{viol}} \right\|_1$                                                & $-10.0$                 \\
        Collision                 & $\sum_{i} \mathbb{I}_{\text{collision on body i}}$                                       & $-1.0$                  \\
        \hline
    \end{tabular}
\end{table}

Here $h_b$ is base height, with nominal value $h_b^{\mathrm{nominal}}=0.26$\,m. For foot $i$, $\bm{p}_i$ and $\dot{\bm{p}}_i$ denote position and velocity; $\bm{p}_{xy,i}^{\mathrm{nominal}}$ is the nominal planar position in a gravity-aligned, base-centered frame with the robot's yaw. The vector $\bm{\tau}$ contains joint torques and $\ddot{\bm{q}}$ joint accelerations. Contact indicator $c_i$ is one above 1\,N vertical foot force; $d_i$ marks the first contact after flight, using current and previous contact states, and $T_i$ is the corresponding air time in seconds. Indicators $\mathbb{I}_{\text{move cmd}}$ and $\mathbb{I}_{\text{zero cmd}}$ select planar command norms above and below 0.1\,m/s. The vector $\bm{q}_{\mathrm{viol}}$ measures violations of 90\% soft joint limits; the collision indicator is one for a body contact exceeding 0.1\,N.

Training combines sustained base forces, friction randomization, and intermittent velocity pushes using Table~\ref{tab:dr}. Here $\mathcal{U}(a,b)$ is uniform between $a$ and $b$, and $\mathbb{S}^2$ is the unit sphere of force directions.

\begin{table}[t]
    \centering
    \caption{Domain randomization shared by all compared controllers.}
    \label{tab:dr}
    \small
    \setlength{\tabcolsep}{3pt}
    \renewcommand{\arraystretch}{1.1}
    \begin{tabular}{l|l}
        \hline
        \rowcolor{rewardTask}
        \textbf{Quantity}           & \textbf{Setting}          \\
        \hline
        Friction coefficient        & $\mathcal{U}(0.5,1.25)$   \\
        Velocity-push interval      & $15$\,s                   \\
        Push velocity magnitude     & $\mathcal{U}(0,1)$\,m/s   \\
        Continuous-force resampling & Every $11$\,s             \\
        Continuous-force magnitude  & $\mathcal{U}(0,70)$\,N    \\
        Continuous-force direction  & Uniform on $\mathbb{S}^2$ \\
        \hline
    \end{tabular}
\end{table}

\begin{figure}[t]
    \centering
    \includegraphics[width=\columnwidth]{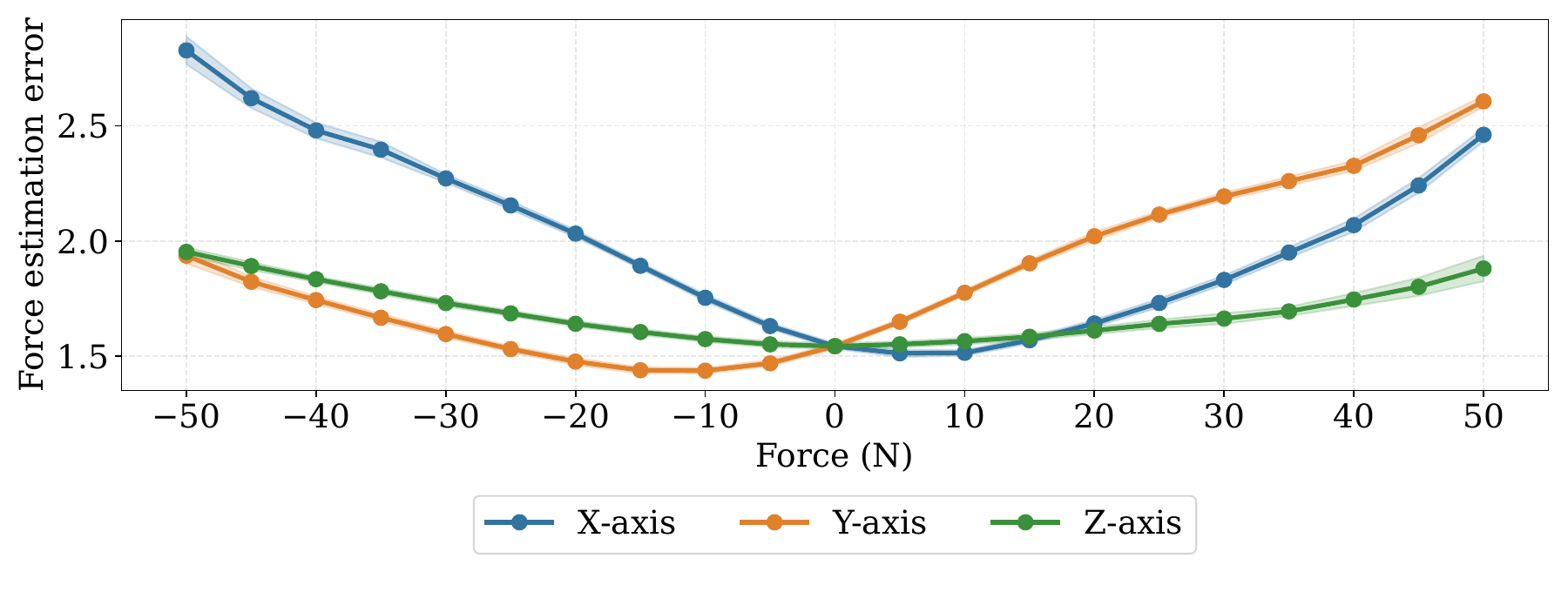}
    \caption{Simulation force-estimation errors under sustained base forces along each axis. Curves and bands show the mean $\pm$ one temporal standard deviation, as defined in Section~\ref{sec:metrics}. All three force components enter each score.}
    \label{fig:force_sim}
\end{figure}

\begin{figure}[t]
    \centering
    \includegraphics[width=\columnwidth]{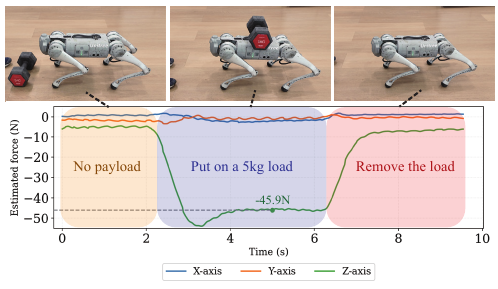}
    \caption{Stationary hardware force estimation. A 5\,kg dumbbell is placed on the robot and removed. The curves show estimated force components, with payload gravity serving as the reference for the settled vertical value.}
    \label{fig:force_real}
\end{figure}

\begin{figure}[t]
    \centering
    \includegraphics[width=\columnwidth]{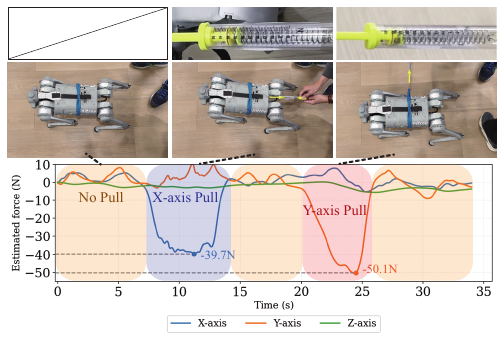}
    \caption{Stationary horizontal-force estimation. Spring-scale magnitudes are 41\,N at $t=11$\,s for the x-axis pull and 45\,N at $t=24$\,s for the y-axis pull. The corresponding marked estimates are $-39.7$\,N and $-50.1$\,N; signs indicate the negative base-axis directions.}
    \label{fig:force_xy}
\end{figure}

\begin{figure*}[t]
    \centering
    \includegraphics[width=\textwidth]{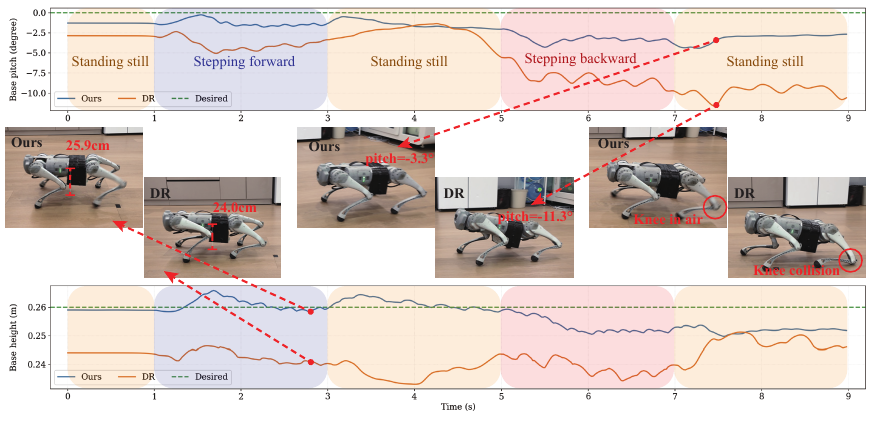}
    \caption{Hardware locomotion with an 8.5\,kg payload secured beneath the torso. Ours uses online force estimates from the proprioceptive estimator to condition the locomotion policy. Both controllers receive the same velocity commands.}
    \label{fig:loco_real}
\end{figure*}

\section{Experiments}
\label{sec:experiments}

Following Table~\ref{tab:qualitative_comparison}, we evaluate force-readout accuracy (\textbf{Q1}), locomotion performance and robustness under loading (\textbf{Q2}), and force-driven command generation for human guidance (\textbf{Q3}).

\subsection{Simulation Setup}
\label{sec:setup}

Policies are trained in Isaac Gym~\cite{makoviychuk2021isaac} using Legged Gym~\cite{rudin2022learning} with 4,096 parallel environments on flat terrain. All controllers share reward weights, domain randomization (Table~\ref{tab:dr}), and network hidden widths. PPO training uses Adam with a learning rate adapted according to Kullback--Leibler (KL) divergence and initialized at $10^{-3}$. Training runs for 2,000 iterations, taking approximately one hour on an RTX 4090.

\subsection{Evaluation Protocol}
\label{sec:protocol}

We compare four controllers: \textbf{DR}, which uses only the current observation; \textbf{implicit adaptation (IA)}, a history baseline based on a variational autoencoder (VAE) and inspired by~\cite{nahrendra2023dreamwaq}; \textbf{Ours}, the proposed framework; and \textbf{Oracle}, which receives ground-truth force and velocity. IA reconstructs the current observation with KL regularization weight $\beta=1$ and has no explicit velocity head. The comparison evaluates complete controllers with different adaptation modules; Oracle is an empirical privileged reference.

Force-estimation and tracking sweeps use 4,096 robots, and survival tests use 1,024. Evaluation uses flat terrain with friction 0.7, no added mass or inertia, and no observation noise or impulsive pushes. Rollouts last 30\,s, with velocity commands resampled every 5\,s unless standing is specified. Each method uses a single trained checkpoint. A fall or overturn triggers a reset. Error sweeps allow resets; survival requires uninterrupted rollouts.

Hardware demonstrations use a Go1 on flat indoor surfaces. The estimator and policy run at 50\,Hz on an onboard Jetson Nano through a pybind11 interface~\cite{jakobpybind,yang2023cajun}, with joint proportional gain $K_p=30$ and derivative gain $K_d=1$.

Stationary force trials use payload gravity and pointwise spring-scale readings as references. Payload trials use identical supplied commands for both controllers; leash trials use force-generated commands.

\begin{figure}[t]
    \centering
    \includegraphics[width=\columnwidth]{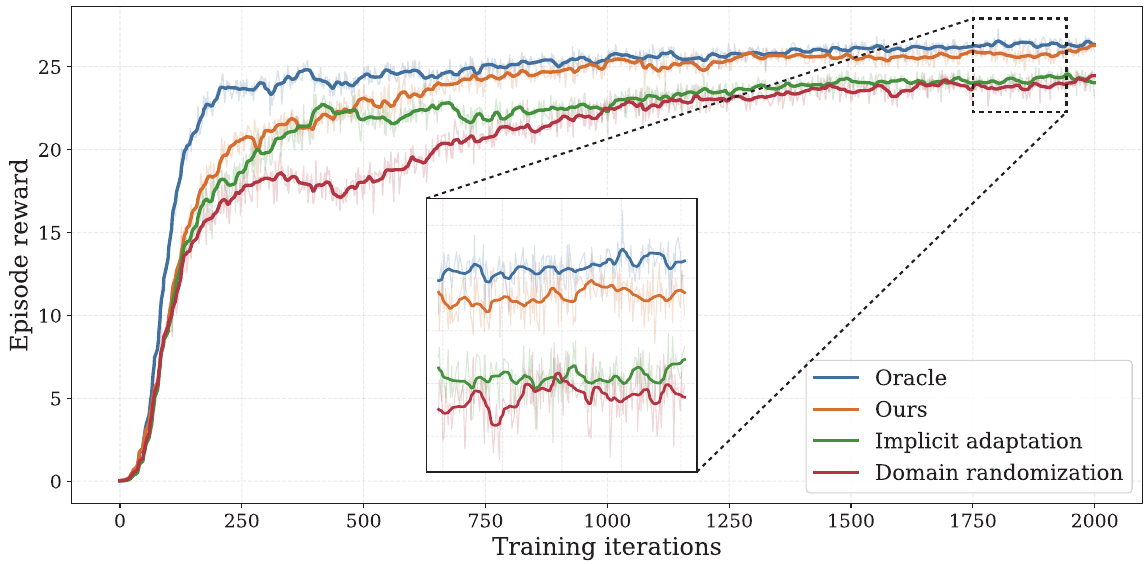}
    \caption{Recorded training rewards. Solid curves use Gaussian smoothing with standard deviation $\sigma_{\mathrm{smooth}}=3$ recorded samples; lighter traces are the unsmoothed recorded rewards. Oracle uses ground-truth conditioning.}
    \label{fig:training}
\end{figure}

\begin{figure}[t]
    \centering
    \includegraphics[width=\columnwidth]{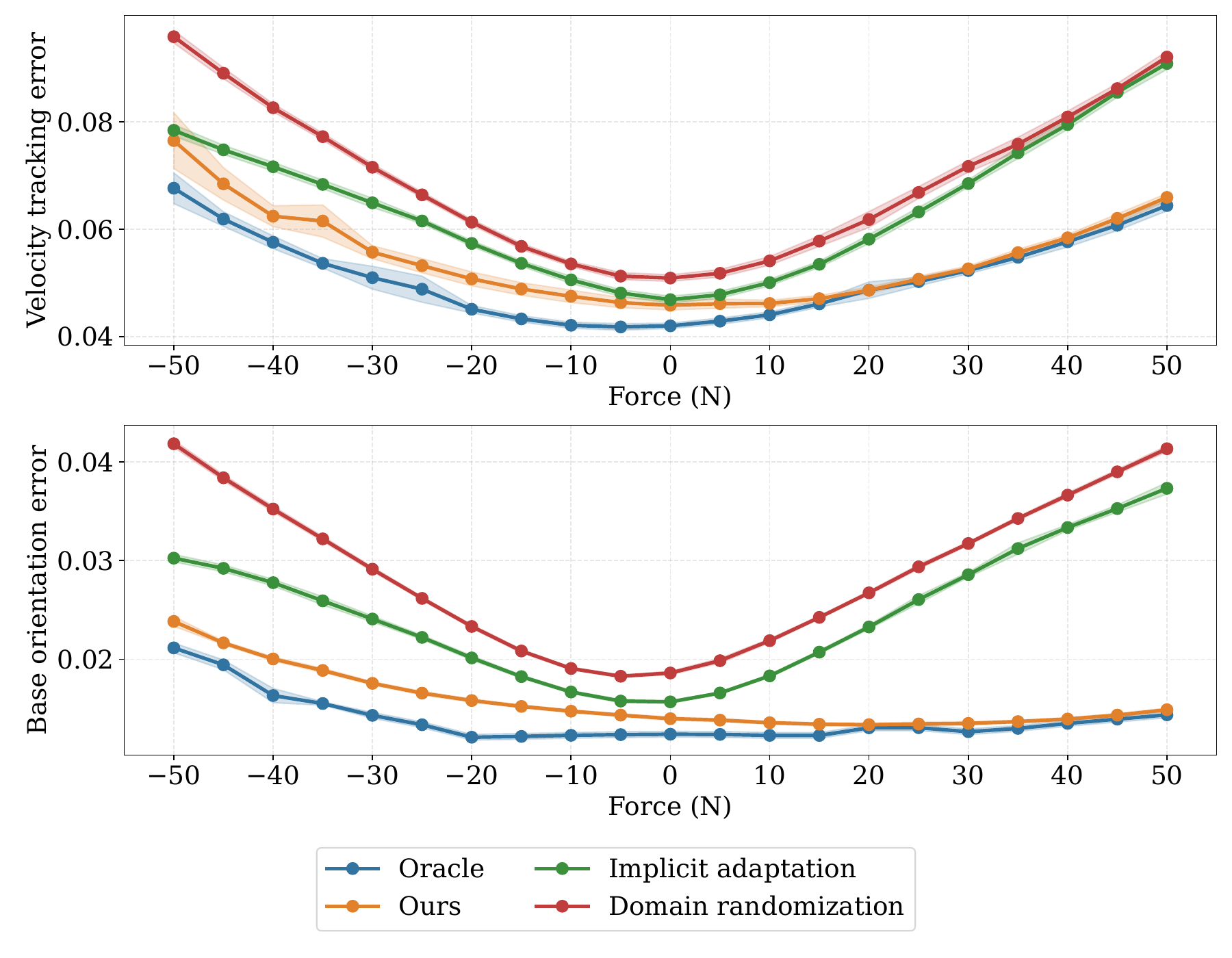}
    \caption{Dimensionless velocity-tracking and base-orientation error scores under external x-axis forces. Oracle uses ground-truth conditioning.}
    \label{fig:loco_sim}
\end{figure}

\begin{figure*}[t]
    \centering
    \includegraphics[width=\textwidth]{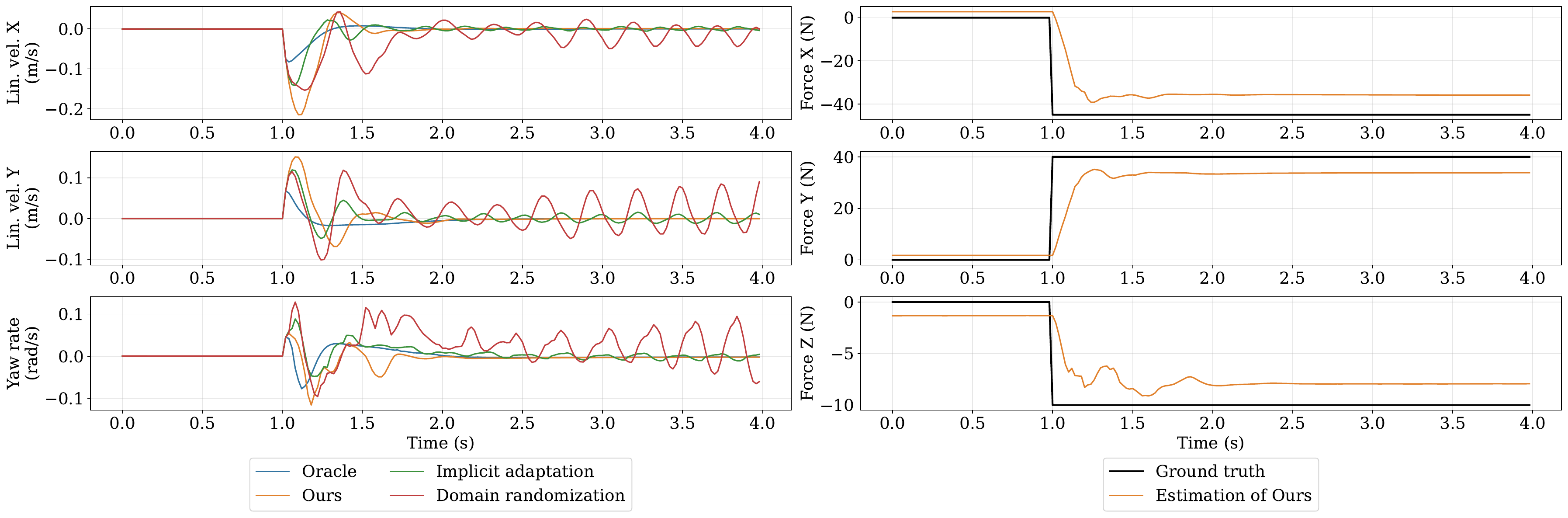}
    \caption{Simulated transient response to a force applied from $t=1$\,s. Left: planar velocity and yaw rate. Right: applied and estimated force.}
    \label{fig:transient}
\end{figure*}

\begin{figure*}[t]
    \centering
    \includegraphics[width=\textwidth]{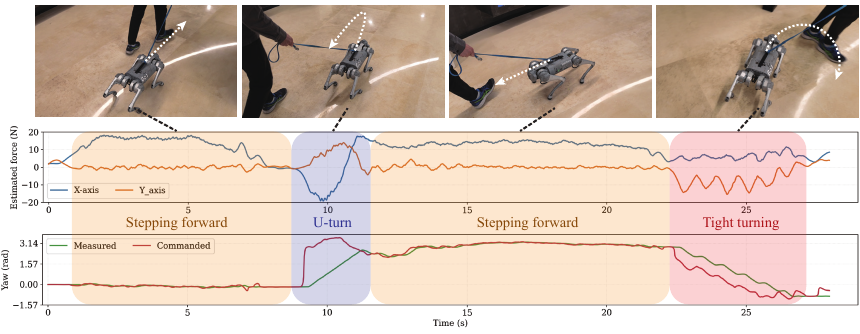}
    \caption{Hardware leash guidance using online force estimates. The sequence includes forward motion, a $180^\circ$ turn, and small-radius turning. The heading reference is reconstructed offline from force estimates and measured yaw.}
    \label{fig:leash}
\end{figure*}

\subsection{Metrics}
\label{sec:metrics}
Let $\delta\bm{v}_{xy}=\bm{v}_{xy}^{\mathrm{cmd}}-\bm{v}_{xy}$ and $\delta\omega_z=\omega_z^{\mathrm{cmd}}-\omega_z$ denote planar velocity and yaw-rate tracking errors. We define a dimensionless composite score using reference scales $v_{\mathrm{ref}}=1$\,m/s and $\omega_{\mathrm{ref}}=1$\,rad/s. At each recorded time $t$, errors are first aggregated over $N$ robots, indexed by $i$:
\begin{align}
    d_{\mathrm{vel},t}   & =\sqrt{\frac{1}{3N}\sum_i
        \left(\frac{\|\delta\bm{v}_{xy,i,t}\|_2^2}{v_{\mathrm{ref}}^2}
    +\frac{(\delta\omega_{z,i,t})^2}{\omega_{\mathrm{ref}}^2}\right)},\label{eq:vel}       \\
    d_{\mathrm{ori},t}   & =\sqrt{\frac{1}{N}\sum_i\|\bm{g}_{xy,i,t}\|_2^2},\label{eq:ori} \\
    d_{\mathrm{force},t} & =\sqrt{\frac{1}{3N}\sum_i
        \|\hat{\bm{\xi}}_{i,t}-\bm{\xi}_{i,t}\|_2^2}.
\end{align}
Orientation error is dimensionless; force error is a componentwise root mean square error (RMSE) in newtons.

For all simulation error summaries, we remove temporal outliers using a standard interquartile range (IQR) filter with a factor of 1.5, applied separately to each method, force condition, and metric. Force-estimation curves show temporal means and standard deviations.

For each metric, the overall score $E$ is the square root of the mean squared per-time errors pooled across force conditions. The relative reduction is $1-E_{\mathrm{ours}}/E_{\mathrm{base}}$, where the subscripts identify Ours and the compared baseline. Survival is the fraction of robots completing 30\,s without falling or overturning; its relative improvement is based on the mean rate over the three axis conditions.

\subsection{Q1: External-Force Estimation}

In simulation, sustained forces are applied along each base axis from $-50$ to $50$\,N in 5\,N increments, within the 70\,N training range. Temporal means of componentwise force RMSE range from 1.44 to 2.83\,N, including errors on unloaded axes (Fig.~\ref{fig:force_sim}).

On hardware, a 5\,kg dumbbell is placed on the stationary Go1 for approximately 4\,s and removed (Fig.~\ref{fig:force_real}). The near-steady vertical estimate is $-45.9$\,N, differing from the $-49$\,N gravitational reference by 3.1\,N (6.3\%); horizontal estimates remain near zero. The approximately 1\,s response after each change reflects the combined robot--estimator dynamics, beyond the 0.18\,s history span.

Separate pulls along the negative base $x$ and $y$ axes provide horizontal references (Fig.~\ref{fig:force_xy}). At $t=11$\,s, the x-axis estimate is $-39.7$\,N against a signed spring-scale reference of $-41$\,N, an absolute error of 1.3\,N (3.2\%). At $t=24$\,s, the y-axis estimate is $-50.1$\,N against $-45$\,N, an error of 5.1\,N (11.3\%).

\subsection{Q2: Locomotion Performance and Robustness}

\subsubsection{Tracking and Payload Locomotion}

In the recorded training runs, all methods improve rapidly at first (Fig.~\ref{fig:training}). Ours approaches the Oracle reward trace, while DR and IA reach lower rewards.

The tracking sweep applies x-axis forces from $-50$ to $50$\,N in 5\,N increments (Fig.~\ref{fig:loco_sim}). Ours reduces the pooled tracking and orientation scores by 21.6\% and 46.5\% relative to DR, and by 15.1\% and 37.0\% relative to IA. These gains characterize the complete force-conditioned framework.

The hardware trial uses an 8.5\,kg payload, approximately 70\% of nominal robot mass, secured beneath the torso during standing, forward stepping, and backward stepping (Fig.~\ref{fig:loco_real}). Around $t=3$\,s, the displayed heights are 24\,cm for DR and 25.9\,cm for Ours; around $t=7.5$\,s, pitches are $-11.3^\circ$ and $-3.3^\circ$, respectively. Hind-knee ground contacts occur in the DR backward-stepping sequence and are absent in the displayed sequence for Ours.

The payload weighs approximately 83\,N, exceeding the 70\,N training range, and changes mass distribution and inertia. This trial demonstrates online force-conditioned load carrying beyond the trained applied-force model.

\subsubsection{Survival and Transient Response}
\begin{table}[t]
    \centering
    \caption{Simulation survival rates at the specified force conditions. Oracle is a privileged reference; bold marks the best among Ours, IA, and DR.}
    \label{tab:survival}
    \small
    \setlength{\tabcolsep}{3pt}
    \renewcommand{\arraystretch}{1.1}
    \begin{tabular}{l|r|r|r}
        \hline
        \rowcolor{rewardSafety}
        \textbf{Method} & \textbf{X: 80\,N} & \textbf{Y: 60\,N} & \textbf{Z: 80\,N} \\
        \hline
        Oracle          & 100.00\%          & 100.00\%          & 84.08\%           \\
        \hline
        Ours            & \textbf{100.00\%} & \textbf{100.00\%} & \textbf{83.79\%}  \\
        IA              & 52.25\%           & 84.77\%           & 69.04\%           \\
        DR              & 94.73\%           & 44.04\%           & 66.11\%           \\
        \hline
    \end{tabular}
\end{table}

Survival tests apply continuous forces of 80\,N along $x$, 60\,N along $y$, or 80\,N along $z$, with a random sign per robot (Table~\ref{tab:survival}). Commands specify standing in the $x/y$ tests and are resampled every 5\,s in the $z$ test. Ours has the highest survival among the proprioceptive controllers and remains close to Oracle. Mean survival is 94.60\% for Ours and 68.29\% for DR, an increase of 26.30 percentage points or 38.5\%. This corresponds to 2,906 and 2,098 surviving episodes across 3,072 starts. The 80\,N cases exceed the training force magnitude.

In the transient test, a force is applied from $t=1$\,s while the robot is commanded to stand (Fig.~\ref{fig:transient}). Ours and Oracle return toward the stationary command, while DR and IA show larger oscillations. Estimated components follow the applied force after the onset transient.

\subsection{Q3: Leash-Guided Control}

An elastic tether attached to the robot's back transmits human guidance forces. The same force output that conditions locomotion generates motion commands through~\eqref{eq:leash}, with $K_1=K_2=0.05$\,m\,s$^{-1}$\,N$^{-1}$ and $K_3=5$\,s$^{-1}$. The robot follows the pulling direction through forward locomotion, a $180^\circ$ turn, and small-radius turning (Fig.~\ref{fig:leash}).

The plotted heading reference is reconstructed offline as measured yaw plus $\operatorname{atan2}(\hat f_y,\hat f_x)$ when horizontal force magnitude exceeds 6\,N, and measured yaw otherwise. It is unwrapped and bidirectionally exponentially smoothed with coefficient 0.5.

\section{Discussion and Limitations}
\label{sec:discussion}

The experiments support two uses of a shared force output: conditioning locomotion under prescribed commands and generating commands for human guidance. The reported gains characterize the jointly trained framework as a whole. Hardware demonstrations use a Go1 on flat terrain, with force-reference measurements obtained during stationary loading. Further work will examine component contributions and force estimation during dynamic interactions, and extend evaluation across terrains and payload configurations.

\section{Conclusion}

We presented a shared proprioceptive force interface for payload locomotion and leash guidance. Sustained-force supervision provides a three-axis estimate in newtons that conditions locomotion and generates interaction commands. The complete framework improves the reported simulation tracking and survival metrics. Go1 trials demonstrate stationary force readout, online load carrying beyond the training force magnitude, and force-driven guidance.

\section*{Acknowledgment}

OpenAI Codex (GPT-6) assisted with manuscript organization and language editing. The experimental results originate from the authors' existing experiments.

\begingroup
\microtypesetup{expansion=false}
\bibliographystyle{IEEEtran}
\bibliography{ICRA_references}

@article{slotine1987adaptive,
  title   = {On the adaptive control of robot manipulators},
  author  = {Slotine, Jean-Jacques E. and Li, Weiping},
  journal = {Int. J. Robot. Res.},
  volume  = {6},
  number  = {3},
  pages   = {49--59},
  year    = {1987},
  doi     = {10.1177/027836498700600303}
}

@inproceedings{zhong2025bas,
  author    = {Zhong, Yichao and Zhang, Chong and He, Tairan and Shi, Guanya},
  title     = {Bridging Adaptivity and Safety: Learning Agile Collision-Free Locomotion Across Varied Physics},
  booktitle = {Proc. Learn. Dyn. Control Conf. (L4DC)},
  volume    = {283},
  pages     = {1498--1511},
  year      = {2025},
  publisher = {PMLR}
}

@inproceedings{chang2025beyond,
  author    = {Chang, Leixin and Nai, Yuxuan and Chen, Hua and Yang, Liangjing},
  title     = {Beyond Robustness: Learning Unknown Dynamic Load Adaptation for Quadruped Locomotion on Rough Terrain},
  booktitle = {Proc. IEEE Int. Conf. Robot. Autom. (ICRA)},
  pages     = {10282--10288},
  year      = {2025},
  doi       = {10.1109/ICRA55743.2025.11128639}
}

@inproceedings{haack2025adaptive,
  author    = {Haack, Jonas and Stark, Franek and Vyas, Shubham and Kirchner, Frank and Kumar, Shivesh},
  title     = {Adaptive Model-Based Control of Quadrupeds via Online System Identification using {Kalman} Filter},
  booktitle = {Proc. IEEE/RSJ Int. Conf. Intell. Robots Syst. (IROS)},
  pages     = {5039--5044},
  year      = {2025},
  doi       = {10.1109/IROS60139.2025.11246753}
}

@inproceedings{tournois2017online,
  author    = {Tournois, Guido and Focchi, Michele and Del Prete, Andrea and Orsolino, Romeo and Caldwell, Darwin G. and Semini, Claudio},
  title     = {Online Payload Identification for Quadruped Robots},
  booktitle = {Proc. IEEE/RSJ Int. Conf. Intell. Robots Syst. (IROS)},
  pages     = {4889--4896},
  year      = {2017},
  doi       = {10.1109/IROS.2017.8206367}
}

@article{jin2022unknown,
  author  = {Jin, Bingchen and Ye, Shusheng and Su, Juntong and Luo, Jianwen},
  title   = {Unknown Payload Adaptive Control for Quadruped Locomotion With Proprioceptive Linear Legs},
  journal = {IEEE/ASME Trans. Mechatronics},
  volume  = {27},
  number  = {4},
  pages   = {1891--1899},
  year    = {2022},
  doi     = {10.1109/TMECH.2022.3170548}
}

@article{morlando2021wholebody,
  author  = {Morlando, Viviana and Teimoorzadeh, Ainoor and Ruggiero, Fabio},
  title   = {Whole-Body Control With Disturbance Rejection Through a Momentum-Based Observer for Quadruped Robots},
  journal = {Mech. Mach. Theory},
  volume  = {164},
  pages   = {104412},
  year    = {2021},
  doi     = {10.1016/j.mechmachtheory.2021.104412}
}

@inproceedings{zhou2025hac,
  author    = {Zhou, Xiang and Zhang, Xinyu and Wu, Tong and Zhang, Qingrui and Zhang, Lixian},
  title     = {{HAC-LOCO}: Learning Hierarchical Active Compliance Control for Quadruped Locomotion Under Continuous External Disturbances},
  booktitle = {Proc. IEEE/RSJ Int. Conf. Intell. Robots Syst. (IROS)},
  pages     = {10649--10655},
  year      = {2025},
  doi       = {10.1109/IROS60139.2025.11247775}
}

@inproceedings{xu2025facet,
  author    = {Xu, Botian and Weng, Haoyang and Lu, Qingzhou and Gao, Yang and Xu, Huazhe},
  title     = {{FACET}: Force-Adaptive Control via Impedance Reference Tracking for Legged Robots},
  booktitle = {Proc. Conf. Robot Learn. (CoRL)},
  volume    = {305},
  pages     = {705--720},
  year      = {2025},
  publisher = {PMLR}
}

@article{gu2025hierarchical,
  author  = {Gu, Sai and Meng, Fei and Liu, Botao and Chen, Xuechao and Yu, Zhangguo and Huang, Qiang},
  title   = {Hierarchical Cooperative Locomotion Control of Human and Quadruped Robot Based on Interactive Force Guidance},
  journal = {IEEE/ASME Trans. Mechatronics},
  volume  = {30},
  number  = {6},
  pages   = {6677--6687},
  year    = {2025},
  doi     = {10.1109/TMECH.2025.3535721}
}

@article{gu2025adaptive,
  author  = {Gu, Sai and Meng, Fei and Liu, Botao and Chen, Xuechao and Yu, Zhangguo and Huang, Qiang},
  title   = {Adaptive Interactive Control of Human and Quadruped Robot Load Motion},
  journal = {IEEE/ASME Trans. Mechatronics},
  volume  = {30},
  number  = {2},
  pages   = {1459--1470},
  year    = {2025},
  doi     = {10.1109/TMECH.2024.3425857}
}

@misc{li2026rapid,
  author        = {Li, Dichen and Ai, Bo and Bohlinger, Nico and Peters, Jan and Su, Hao and Christensen, Henrik I.},
  title         = {Rapid Embodiment Adaptation for Quadrupedal Locomotion},
  year          = {2026},
  eprint        = {2608.01506},
  archivePrefix = {arXiv},
  primaryClass  = {cs.RO},
  doi           = {10.48550/arXiv.2608.01506}
}

@article{hwangbo2019learning,
  title     = {Learning agile and dynamic motor skills for legged robots},
  author    = {Hwangbo, Jemin and Lee, Joonho and Dosovitskiy, Alexey and Bellicoso, Dario and Tsounis, Vassilios and Koltun, Vladlen and Hutter, Marco},
  journal   = {Sci. Robot.},
  volume    = {4},
  number    = {26},
  pages     = {eaau5872},
  year      = {2019},
  doi       = {10.1126/scirobotics.aau5872},
  publisher = {American Association for the Advancement of Science}
}

@article{miki2022learning,
  title     = {Learning robust perceptive locomotion for quadrupedal robots in the wild},
  author    = {Miki, Takahiro and Lee, Joonho and Hwangbo, Jemin and Wellhausen, Lorenz and Koltun, Vladlen and Hutter, Marco},
  journal   = {Sci. Robot.},
  volume    = {7},
  number    = {62},
  pages     = {eabk2822},
  year      = {2022},
  doi       = {10.1126/scirobotics.abk2822},
  publisher = {American Association for the Advancement of Science}
}

@inproceedings{rudin2022learning,
  title     = {Learning to walk in minutes using massively parallel deep reinforcement learning},
  author    = {Rudin, Nikita and Hoeller, David and Reist, Philipp and Hutter, Marco},
  booktitle = {Proc. Conf. Robot Learn. (CoRL)},
  volume    = {164},
  pages     = {91--100},
  year      = {2022},
  publisher = {PMLR}
}

@inproceedings{kumar2021rma,
  title     = {{RMA}: Rapid motor adaptation for legged robots},
  author    = {Kumar, Ashish and Fu, Zipeng and Pathak, Deepak and Malik, Jitendra},
  booktitle = {Proc. Robot.: Sci. Syst. (RSS)},
  year      = {2021},
  doi       = {10.15607/RSS.2021.XVII.011}
}

@inproceedings{nahrendra2023dreamwaq,
  author    = {Aswin Nahrendra, I Made and Yu, Byeongho and Myung, Hyun},
  booktitle = {Proc. IEEE Int. Conf. Robot. Autom. (ICRA)},
  title     = {{DreamWaQ}: Learning Robust Quadrupedal Locomotion With Implicit Terrain Imagination via Deep Reinforcement Learning},
  year      = {2023},
  pages     = {5078--5084},
  doi       = {10.1109/ICRA48891.2023.10161144}
}

@inproceedings{long2023hybrid,
  title     = {Hybrid Internal Model: Learning Agile Legged Locomotion with Simulated Robot Response},
  author    = {Long, Junfeng and Wang, Zirui and Li, Quanyi and Gao, Jiawei and Cao, Liu and Pang, Jiangmiao},
  booktitle = {Proc. Int. Conf. Learn. Represent. (ICLR)},
  year      = {2024}
}

@article{ji2022concurrent,
  title     = {Concurrent training of a control policy and a state estimator for dynamic and robust legged locomotion},
  author    = {Ji, Gwanghyeon and Mun, Juhyeok and Kim, Hyeongjun and Hwangbo, Jemin},
  journal   = {IEEE Robot. Autom. Lett.},
  volume    = {7},
  number    = {2},
  pages     = {4630--4637},
  year      = {2022},
  doi       = {10.1109/LRA.2022.3151396},
  publisher = {IEEE}
}

@inproceedings{benallegue2018model,
  title        = {Model-based external force/moment estimation for humanoid robots with no torque measurement},
  author       = {Benallegue, Mehdi and Gergondet, Pierre and Audren, Herv\'{e} and Mifsud, Alexis and Morisawa, Mitsuharu and Lamiraux, Florent and Kheddar, Abderrahmane and Kanehiro, Fumio},
  booktitle    = {Proc. IEEE Int. Conf. Robot. Autom. (ICRA)},
  pages        = {3122--3129},
  doi          = {10.1109/ICRA.2018.8460809},
  year         = {2018},
  organization = {IEEE}
}

@inproceedings{defazio2023seeing,
  title     = {Seeing-eye quadruped navigation with force responsive locomotion control},
  author    = {DeFazio, David and Hirota, Eisuke and Zhang, Shiqi},
  booktitle = {Proc. Conf. Robot Learn. (CoRL)},
  volume    = {229},
  pages     = {2184--2194},
  year      = {2023},
  publisher = {PMLR}
}

@inproceedings{pinto2017asymmetric,
  author    = {Pinto, Lerrel and Andrychowicz, Marcin and Welinder, Peter and Zaremba, Wojciech and Abbeel, Pieter},
  year      = {2018},
  booktitle = {Proc. Robot.: Sci. Syst. (RSS)},
  title     = {Asymmetric Actor Critic for Image-Based Robot Learning},
  doi       = {10.15607/RSS.2018.XIV.008}
}

@misc{schulman2017proximal,
  title        = {Proximal policy optimization algorithms},
  author       = {Schulman, John and Wolski, Filip and Dhariwal, Prafulla and Radford, Alec and Klimov, Oleg},
  howpublished = {arXiv:1707.06347},
  year         = {2017},
  doi          = {10.48550/arXiv.1707.06347}
}

@inproceedings{makoviychuk2021isaac,
  author    = {Makoviychuk, Viktor and Wawrzyniak, Lukasz and Guo, Yunrong and Lu, Michelle and Storey, Kier and Macklin, Miles and Hoeller, David and Rudin, Nikita and Allshire, Arthur and Handa, Ankur and State, Gavriel},
  booktitle = {Proc. Neural Inf. Process. Syst. Track Datasets Benchmarks},
  title     = {{Isaac Gym}: High Performance {GPU} Based Physics Simulation For Robot Learning},
  volume    = {1},
  year      = {2021}
}

@inproceedings{kingma2014adam,
  author    = {Kingma, Diederik P. and Ba, Jimmy},
  title     = {{Adam}: A Method for Stochastic Optimization},
  booktitle = {Proc. Int. Conf. Learn. Represent. (ICLR)},
  year      = {2015}
}

@misc{jakobpybind,
  title        = {{pybind/pybind11}: Version 3.0.0 (final)},
  author       = {Jakob, Wenzel and Schreiner, Henry and Rhinelander, Jason and Grosse-Kunstleve, Ralf W. and Moldovan, Dean and Smirnov, Ivan and Gokaslan, Aaron and Jadoul, Yannick and Huebl, Axel and Staletic, Boris and Izmailov, Sergei and Cousineau, Eric and Carlstrom, Michael and Merry, Bruce and Spicuzza, Dustin and Lee, Antony and Corlay, Sylvain and Burns, Lori A. and {Dan} and Pan, Xuehai and {bennorth} and Houliston, Trent and Lyskov, Sergey and {b-pass} and {jbarlow} and Haschke, Robert and {\v{S}im\'{a}\v{c}ek}, Michael and Steinberg, Ethan},
  howpublished = {Zenodo, computer software},
  year         = {2025},
  version      = {3.0.0},
  doi          = {10.5281/zenodo.15857181},
  url          = {https://doi.org/10.5281/zenodo.15857181}
}

@inproceedings{yang2023cajun,
  title     = {{CAJun}: Continuous adaptive jumping using a learned centroidal controller},
  author    = {Yang, Yuxiang and Shi, Guanya and Meng, Xiangyun and Yu, Wenhao and Zhang, Tingnan and Tan, Jie and Boots, Byron},
  booktitle = {Proc. Conf. Robot Learn. (CoRL)},
  volume    = {229},
  pages     = {2791--2806},
  year      = {2023},
  publisher = {PMLR}
}

@inproceedings{peng2018sim,
  title        = {Sim-to-real transfer of robotic control with dynamics randomization},
  author       = {Peng, Xue Bin and Andrychowicz, Marcin and Zaremba, Wojciech and Abbeel, Pieter},
  booktitle    = {Proc. IEEE Int. Conf. Robot. Autom. (ICRA)},
  pages        = {3803--3810},
  year         = {2018},
  doi          = {10.1109/ICRA.2018.8460528},
  organization = {IEEE}
}

@article{lee2020learning,
  title     = {Learning quadrupedal locomotion over challenging terrain},
  author    = {Lee, Joonho and Hwangbo, Jemin and Wellhausen, Lorenz and Koltun, Vladlen and Hutter, Marco},
  journal   = {Sci. Robot.},
  volume    = {5},
  number    = {47},
  pages     = {eabc5986},
  year      = {2020},
  doi       = {10.1126/scirobotics.abc5986},
  publisher = {American Association for the Advancement of Science}
}

@article{Chen24physical,
  author  = {Chen, Jiaqi and Frey, Jonas and Zhou, Ruyi and Miki, Takahiro and Martius, Georg and Hutter, Marco},
  journal = {IEEE Robot. Autom. Lett.},
  title   = {Identifying Terrain Physical Parameters From Vision - Towards Physical-Parameter-Aware Locomotion and Navigation},
  year    = {2024},
  volume  = {9},
  number  = {11},
  pages   = {9279--9286},
  doi     = {10.1109/LRA.2024.3455788}
}

@inproceedings{chebotar2019closing,
  author    = {Chebotar, Yevgen and Handa, Ankur and Makoviychuk, Viktor and Macklin, Miles and Issac, Jan and Ratliff, Nathan and Fox, Dieter},
  title     = {Closing the Sim-to-Real Loop: Adapting Simulation Randomization with Real World Experience},
  booktitle = {Proc. IEEE Int. Conf. Robot. Autom. (ICRA)},
  pages     = {8973--8979},
  year      = {2019},
  doi       = {10.1109/ICRA.2019.8793789}
}

@article{aljalbout2026reality,
  author  = {Aljalbout, Elie and Xing, Jiaxu and Romero, Angel and Akinola, Iretiayo and Garrett, Caelan Reed and Heiden, Eric and Gupta, Abhishek and Hermans, Tucker and Narang, Yashraj and Fox, Dieter and Scaramuzza, Davide and Ramos, Fabio},
  title   = {The Reality Gap in Robotics: Challenges, Solutions, and Best Practices},
  journal = {Annu. Rev. Control Robot. Auton. Syst.},
  volume  = {9},
  number  = {1},
  pages   = {403--432},
  year    = {2026},
  doi     = {10.1146/annurev-control-031924-100130}
}

@inproceedings{sobanbabu2025sampling,
  author    = {Sobanbabu, Nikhil and He, Guanqi and He, Tairan and Yang, Yuxiang and Shi, Guanya},
  title     = {Sampling-based System Identification with Active Exploration for Legged {Sim2Real} Learning},
  booktitle = {Proc. Conf. Robot Learn. (CoRL)},
  volume    = {305},
  pages     = {578--598},
  year      = {2025},
  publisher = {PMLR}
}

@ieeetranbstctl{icra:reference_style,
  ctluse_forced_etal       = {yes},
  ctlmax_names_forced_etal = {6},
  ctlnames_show_etal       = {1}
}
\endgroup
\end{document}